\newtheorem{definition}{Definition}
\newtheorem{prop}{Proposition}
\title{HySim: An Efficient Hybrid Similarity Measure for Patch Matching in Image Inpainting}
\author{
	Saad Noufel, Nadir Maaroufi, Mehdi Najib, Mohamed Bakhouya \\
	International University of Rabat \\
	College of Engineering and Architecture, TICLab, LERMA Lab \\
	Sala Al Jadida 11000, Morocco\\
	\texttt{\{saad.noufel, nadir.maaroufi, mehdi.najib, mohamed.bakhouya\}@uir.ac.ma} \\
}
\date{}
\begin{document}
\maketitle

\begin{abstract}
Inpainting, for filling missing image regions, is a crucial task in various applications, such as medical imaging and remote sensing. Trending data-driven approaches efficiency, for image inpainting, often requires extensive data preprocessing. In this sense, there is still a need for model-driven approaches in case of application constrained with data availability and quality, especially for those related for time series forecasting using image inpainting techniques. This paper proposes an improved model-driven approach relying on patch-based techniques. Our approach deviates from the standard Sum of Squared Differences (SSD) similarity measure by introducing a Hybrid Similarity (HySim), which combines both strengths of Chebychev and Minkowski distances. This hybridization enhances patch selection, leading to high-quality inpainting results with reduced mismatch errors. Experimental results proved the effectiveness of our approach against other model-driven techniques, such as diffusion or patch-based approaches, showcasing its effectiveness in achieving visually pleasing restorations.
\end{abstract}

\keywords{Image Inpainting \and Patch-based Approach \and Hybrid approach \and Minkowski and Chebychev distances \and Similarity measure.}

\section{Introduction}
The human eye craves visual coherence, seeking meaning and wholeness even in fragments. This inherent drive fueled the remarkable advancements in computer vision, particularly in the realm of image inpainting. Like magic, this technology restores missing or damaged regions, breathes life back into faded photographs, and empowers artistic visions by removing unwanted elements. From enhancing medical scans for precise diagnoses to reconstructing precious historical documents, image inpainting's applications stretch across diverse fields. For instance, filling in missing data in medical scans to improve disease detection and treatment \cite{xu2023review}. Reconstructing corrupted areas in satellite images for accurate environmental monitoring \cite{Zhang_Siyu2018-ee}. Bringing old photographs back to life by repairing tears, scratches, and faded details \cite{hu2014fast}. Removing unwanted objects or manipulating images for creative expression \cite{yang2020generative}, \cite{shen2018deep}, \cite{zhang2022gan}, and \cite{lahiri2020prior}.

Yet, restoring visual integrity involves navigating two distinct approaches: model-driven and data-driven approaches \cite{li2017localization}, each with its own strengths and limitations.
Model-driven approaches, encompassing diffusion-based, and patch-based approaches, excel at handling small missing regions and local image structure. However, their brushstrokes falter when faced with extensive damage or intricate details, struggling to preserve semantic content and often falling prey to artificial repetitions, overlapping patches, and long processing times \cite{salem2021survey}.
This is where data-driven approaches strides onto the canvas. Leveraging the power of neural networks, these approaches capture intricate patterns and semantic information with stunning proficiency, offering a more adaptive solution to complex inpainting tasks. Yet, challenges remain in capturing long-range dependencies, maintaining global and local consistency, and avoiding unwanted artifacts in the inpainted regions \cite{xu2023review}. Additionally, training deep networks often requires vast amounts of diverse and high-quality training data, which can be a practical hurdle \cite{burlin2017deep}.

Our work focuses on examplar-based approaches, which is motivated by the proposed approach presented in \cite{criminisi2004region}. The procedure begins with a target region, representing the missing area, and a source region, constituted of the entire image except the target area. Subsequently, the approach utilizes exemplar-based technique to fill the target region with pixels from the source region. This process involves identifying a patch in the source region, which is similar to the one in the target region and transferring missing pixels from the source to the target region. The approach in \cite{criminisi2004region} has proven effective in image inpainting. However, its reliance on the sum of squared distances (SSD) for patch similarity assessment can introduce cumulative mismatch errors, potentially compromising inpainting quality, particularly in scenarios involving intricate textures and structures. To address this limitation, we propose a refined inpainting approach that builds upon \cite{criminisi2004region} framework but integrates a new similarity measure. This latter demonstrates enhanced sensitivity to structural and textural variations within image patches, leading to improved accuracy in patch matching. Crucially, this refinement is achieved without necessitating modifications to the core priority algorithm, preserving its computational efficiency.

In summary, the main contribution of the work presented in this paper are three folds:
\begin{itemize}
	\item Investigate the examplar-based approach using experimental testing.
	\item Shed more light on key differences between similarity and distance concepts by highlighting their inherent duality.
	\item Introduce a novel Hybrid Similarity (HySim) measure, which combines the strengths of both Minkowski and Chebyshev distances, leading to improved performance.
\end{itemize}

The reminder of this paper is structured as follow. Section 2 provides a comprehensive state-of-the-art review of the inpainting methods. Section 3 introduces our proposed hybrid similarity measure and its seamless integration into the inpainting process. Section 4 reveal our results showcasing the effectiveness of our approach compared to existing methods. Section 5 summarizes our findings, discussing the implications of our work, while highlighting potential future directions of this approach.

\section{Related work}
In the domain of image inpainting, three prominent approaches have emerged, each offering unique techniques to address the challenge of reconstructing missing or damaged regions in images. \textbf{\textit{Diffusion-based approaches}}, \textbf{\textit{Patch-based approaches}}, and \textbf{\textit{Deep Learning approaches}}, as depicted in Fig.\ref{fig:methods}.
\begin{figure}[H]
	\centering
	\includegraphics[width=\textwidth]{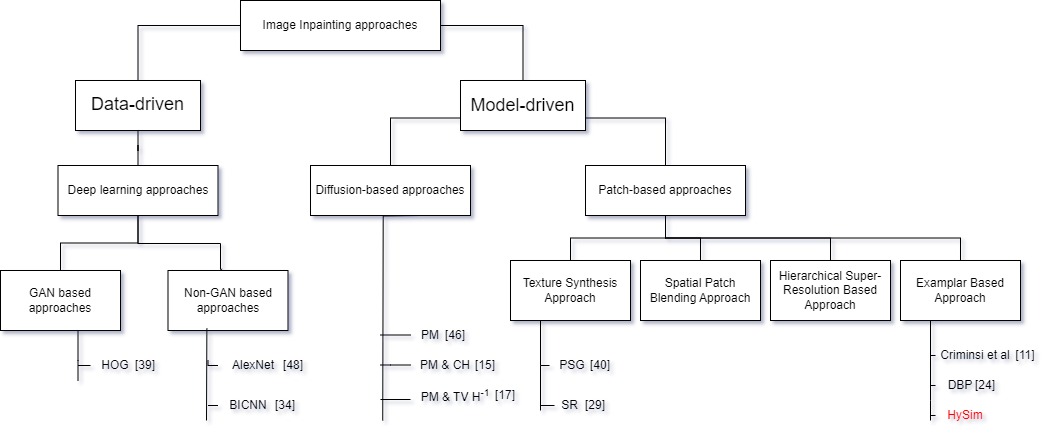}
	\caption{Comprehensive Overview of Image Inpainting Approaches.}
	\label{fig:methods}
\end{figure}

\subsection{Deep Learning Approach}
Leveraging the power of neural networks, deep learning approaches have demonstrated remarkable proficiency in capturing complex patterns and semantic content of images, offering a more adaptive and sophisticated solution to the challenges posed by inpainting tasks. The deep learning approaches can be classified into two main categories: \textit{Non-GAN} and \textit{GAN} based approaches.

The Non-GAN based approach uses Convolutional Neural Networks (CNN) to learn the patterns and features of an image and fill in missing regions based on that learning. For instance, the work presented in \cite{weerasekera2018just} incorporate the image's depth map as an input into the CNN architecture, while in \cite{zhao2018unsupervised}, they used CNN to inpaint X-ray medical images. In \cite{Chang2019-fz} another CNN-based approach specifically tailored for video inpainting was proposed, addressing the task of object removal. In contrast to conventional image inpainting techniques that require knowledge of the damaged pixels, \cite{cai2017blind} introduces a blind image inpainting approach known as BICNN. AlexNet \cite{krizhevsky2012imagenet}, based on CNN, uses multiple convolutional and pooling layers, followed by fully connected layers to extract meaningful features from the input image. The aim is to accurately predict missing parts of the image during the inpainting process. In \cite{zhu2018deep}, authors presented a patch-based inpainting method geared towards forensics images. However, CNN-based approaches often cannot effectively establish connections between missing areas and source known areas, often leading to structural distortions, texture blurring, and incoherence on the repaired area boundaries \cite{xu2023review}.

Meanwhile, the GAN-based approaches built upon generative adversarial network,  which is a framework that generates artificial samples indistinguishable from their real counterparts by competing subnetworks \cite{tang2017automatic}. For instance, the work in \cite{pathak2016context} presented the first GAN-based technique for image inpainting. They used context encoders to fill missing holes, based on the surrounding environment of the target area and the context of the whole image. On one hand, \cite{wang2022dual} proposed a pyramid context encoding network to inpaint images. It fills the damaged areas in a pyramid-style. On the other hand,  \cite{salem2021novel} proposed a face inpainting method that uses the Histogram of Oriented Gradient(HOG) as guidance to inpaint Humans in images. However,
the training of GAN is difficult since usually suffers from instability,
gradient disappearance and mode collapse \cite{zhang2023image}.

\subsection{Diffusion-based Approach}
Diffusion-based approach relies on Partial differential equations (PDEs), which offer a powerful toolkit for image denoising, effectively removing noise while preserving crucial image features and structures. Established techniques like anisotropic diffusion and spatial-fractional approaches demonstrate this effectiveness \cite{ashouri2022new}. In the field of image denoising, Perona and Malik \cite{perona1990scale} introduced a prominent feature-preserving algorithm known as the Perona-Malik (PM) model. This model utilizes an anisotropic diffusion equation, where a variable coefficient controls the diffusion process to effectively retain edges within the image.

\noindent Since diffusion-based approaches depends on PDE, the first work that proposed a PDE model for image inpainting was presented in \cite{bertalmio2000image}. It works by prolonging information from the surrounding areas to fill in the inpainted region (Fig.\ref{fig:diff}). However, this approach leads to blurry images and suffers from the presence of artifacts around the recovered region. In \cite{le2011examplar}, authors combines a PDE-based technique with exemplar-based, where a structure tensor for priority computing of patches is used for inpainting. The PDE approach propagates (i.e. diffuses), the local information of the image's surrounding region into the missing hole. \cite{sridevi2019image} presented a mathematical model, which is derived from using fractional-order derivative and Fourier transformation to remove text, noise, and blur in images. However, the main disadvantage of diffusion-based approaches is that it usually provides ambiguous result, especially when inpainting large missing holes. Diffusion-based approach is more suitable for completing simple areas of lines, curves and small missing areas.

\begin{figure}[H]
	\centering
	\includegraphics[width=.35\textwidth]{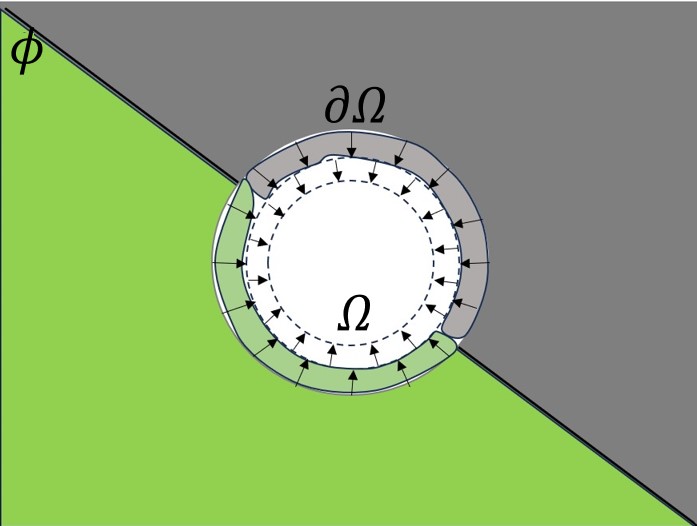}
	\caption{\textbf{Illustration of the diffusion-based inpainting process.} $\Phi$ is the source region, $\Omega$ is the target region, $\partial \Omega$ is $\Omega$ boundary.}
	\label{fig:diff}
\end{figure}

Further refining the diffusion process, the PM model offers a nuanced approach. This partial differential equation acts as a dynamic diffusion coefficient, carefully modulating the smoothing strength based on local image intensity gradients. Crucially, it reduces diffusion across strong edges, thereby preserving them while effectively smoothing noise and inconsistencies within uniform regions. Its selective behavior makes the PM model particularly well-suited for image inpainting, where reconstruction fidelity hinges on seamlessly filling missing areas without blurring crucial details.

Building upon this foundation, the work presented in \cite{zou2021image} proposed a fourth-order PDE model by ingeniously combining the PM equation with the Cahn-Hilliard (CH) equation. This approach stemmed from addressing the limitations of the Cahn-Hilliard equation, as observed in \cite{bertozzi2006inpainting}, where edge smoothness suffered. Recognizing the PM equation's effectiveness in smoothing, the authors sought to synergize the strengths of both equations. And so, they proposed a forth order PDE. While demonstrating superior results for grayscale images, the combined model exhibits limitations in achieving optimal smoothness while preserving linear structures. In \cite{banerjeeimage}, the authors proposed a new fourth order PDE-based model for image inpainting, which is based on total variation (TV - $H^{-1}$) coupled with PM Equation. However, despite their efficiency in handling inpainting of gray images, they are enable to handle complex textures and the tendency to produce over-smoothed results.

\subsection{Patch-based Approach}
Patch-based image inpainting is an approach where missing or damaged portions of an image are reconstructed by filling them with information from surrounding patches (Fig.\ref{fig:patch}). This approach is further classified into four sub-categories as shown in (Fig.\ref{fig:methods}). \textit{Texture Synthesis approach}, \textit{Spatial Patch Blending approach}, \textit{The hierarchical super-resolution approach}, \textit{Examplar-based approach}. Texture synthesis approach randomly uses near pixel to fill the missing information in the image. By introducing a notion of patching, these approaches select existing pixels in a similar neighborhood \cite{salem2021survey}. The work presented in \cite{casaca2014laplacian} proposed a new technique for efficiently filling in missing parts of textured images while preserving their texture and structure. It combines anisotropic diffusion, transport equation, and texture synthesis to achieve high-quality inpainting results. Even though this technique yielded positive outcomes, it has some limitations, such as parameters adjustment and unpleasant results when painting non-textured images with color variations.

\begin{figure}[H]
	\centering
	\includegraphics[width=.35\textwidth]{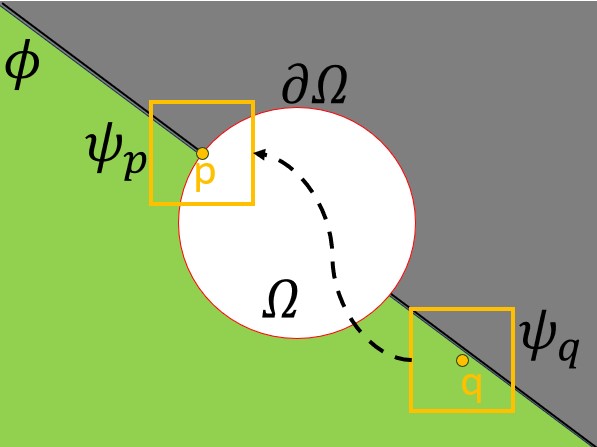}
	\caption{\textbf{Illustration of the patch-based inpainting process.} $\Phi$ is the source region, $\Omega$ is the target region, $\partial \Omega$ is $\Omega$ boundary, $\Psi_p$ is the target patch, $\Psi_q$ is the selected patch.}
	\label{fig:patch}
\end{figure}

\noindent Spatial Patch Blending approach \cite{daisy2013spatial} involves two key steps: artifact detection and spatial patch blending. In other patch-based inpainting approaches, isotropic blending is performed equally in all directions. However, spatial patch blending recognizes that discarded parts of individual patches contain valuable information and aims to reduce seams between patch pieces. This process uses a Gaussian weight function for blending, with computation time being a significant challenge dependent on mask size.

\noindent The hierarchical super-resolution based approach is a framework for inpainting, which helps recovering details on missing areas in low-resolution images, improving both computational complexity and visual quality \cite{le2013hierarchical}. The work in \cite{liu2023sequential} proposed Hierarchical super-resolution that explores hierarchical feature information using a sequential multi-scale block and distribution transformation block. Hierarchical super-resolution based approach can reconstruct high-resolution images from low-resolution images, but face challenges, such as low-pass filtering, translation, decimation, and noise corruption \cite{humblot2006super}.

\noindent Examplar-based inpainting approach pioneered by \cite{criminisi2004region}, laid the groundwork for subsequent advancements. Their seminal work introduced two key propositions: priority computation and patch selection, paving the way for further enhancements. Building upon this foundation, we present an overview of diverse exemplar-based approaches, critically analyzing their strengths and limitations.

\noindent For instance, state-of-the-art exemplar-based methods built upon \cite{criminisi2004region} have primarily focused on improving either priority computation, patch selection, or both. The first one aims to identify the most crucial regions to be filled, often based on criteria like information content, proximity to known areas, or edge structure. In contrast, the second one focuses on finding the most similar patches within the known image region to replace the missing parts.

\noindent In \cite{deng2015exemplar}, authors proposed two main contributions. Firstly, a new priority definition that encourages geometry propagation, and secondly, an algorithm to estimate steps of the new priority. While in the work proposed by \cite{zhang2020image}, authors addressed the issue of mismatch errors in image inpainting by introducing Differences Between Patches (DBP) measure in order to compare target and exemplar patches. Combining DBP with the standard SSD adaptively detects mismatch errors. Upon detection, a two-round search strategy with a new matching rule is applied to find a more suitable exemplar patch. Furthermore, \cite{fan2018novel} tackles the issue of inaccurate patch matching in image inpainting by proposing two key improvements, an adaptive patch approach that dynamically adjusts patch size based on local image sparsity, and a rotation-invariant matching approach that identifies the most similar exemplar patch regardless of its orientation. 

\noindent The work in \cite{ying2017improved} tackles image inpainting by first segmenting the image with a watershed algorithm, ensuring that texture information stays within regional boundaries. It then analyzes curvature features of isophotes to capture fine details and guides patch matching based on these features, leading to accurate texture reconstruction and minimal bleeding artifacts. The result is a simple approach that delivers high-quality inpainting. In \cite{nan2014improved}, authors proposed two improvements to the priority and confidence functions used in image inpainting. Firstly, it replaces the multiplication operation in the traditional priority function with addition and utilizes the golden section as the weight ratio. Secondly, the logistic equation, they have introduced, updates the confidence measure, preventing its rapid decay and allowing for more reliable confidence estimates throughout the inpainting process. These combined changes lead to improved accuracy and efficiency in image inpainting. 

\begin{figure}[H]
	\centering
	\includegraphics[width=.35\textwidth]{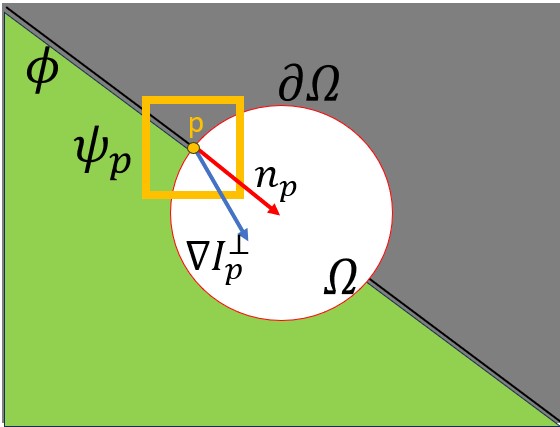}
	\caption{\textbf{Notation diagram.} $\Phi$ is the source region, $\Omega$ is the target region, $\partial \Omega$ is $\Omega$ boundary, $\Psi_p$ is the target patch, $n_p$ is the normal to the contour $\partial \Omega$, and $\nabla I_p^{\bot }$ is the isophote.}
	\label{fig:image}
\end{figure}

\noindent In \cite{xu2010image}, the authors introduce two concepts for patch-based image inpainting, focusing on patch priority and representation. Firstly, patch structure sparsity measures the confidence of a patch at image structures (edges, corners) based on the sparsity of its non-zero similarities with neighbors. Patches with higher sparsity receive higher priority for inpainting, prioritizing key structural elements. Secondly, the inpainted patch is modeled as a sparse linear combination of candidate patches under local consistency constraints. This sparse representation ensures sharp, texture-consistent results in the filled regions. Together, these concepts significantly enhance the effectiveness of patch propagation in examplar-based inpainting. The work in \cite{shen2009image} assumed that each image patch can be sparsely represented using a redundant dictionary. To ensure visual plausibility and consistency, the proposed approach constructs a dictionary by directly sampling from the intact regions of the image. This context-aware dictionary then guides the sequential recovery of each incomplete patch at the hole's boundary, progressively filling it until complete. The work in \cite{reshniak2020nonlocal}, proposes a unified nonlocal inpainting technique tackling structures and textures simultaneously. By combining multi-scale geometry recovery with nonlocal texture synthesis guided by an anisotropic metric, it achieves high-quality results, advancing image restoration.

\section{Examplar-based Approach : Overview and Limitation}
In order to shed more light on the efficiency of the examplar-based approach, we have conducted experiments by investigating mismatch error in the inpainting process.
As stated above, examplar-based approach follows two main steps: \textbf{Priority Computation}, and \textbf{Patch Selection from Source Region}.
The first step focuses on the priority computation of a patch. This latter is computed using a priority function, which determines the order in which patches are filled. This function is defined as $P(p) = C(p) \times D(p)$, where $C(p)$, the confidence term, measures the amount of reliable information surrounding the pixel p and $D(p)$, the data term, measures the difference between the gradient at the boundary of the target region and the gradient at the patch boundary. Their respective formulas computed in Eq(\ref{C(p)}) and Eq(\ref{D(p)}), where $\Psi_p$ is the target patch, $\Omega$ the source region, $n_p$ is the normal to the contour, $\nabla I_p^{\bot }$ is the isophote, and $\alpha$ is a normalizing term usually equal 255.
\begin{equation}
	C(p) = \frac{\sum_{q \in \Psi_p \cap \overline{\Omega}} C(q)}{\mid \Psi_p \mid}
	\label{C(p)}
\end{equation} 

\begin{equation}
	D(p) = \frac{\mid \nabla I_p^{\bot} \cdot n_p \mid}{\alpha}
	\label{D(p)}
\end{equation}
The priority function $P(p)$ thus combines the confidence and data terms to determine the order in which patches are filled, ensuring a balance between reliable information and preservation of image structures.

The second step concerns the patch selection from the source region. In fact, to find the best replacement for the target region, the patch selection searches for similar patches within the source region. This is achieved by comparing the target patch to potential matches in the source, using measures like texture and structure to determine the best fit. In \cite{criminisi2004region}, authors used the SSD metric to measure the best replacement patch out of all the possible patches (Eq.\ref{ssd}), where $\Psi_p$ and $\Psi_q$ are respectively the target and possible replacement patches, and $M$ the binary mask, where one indicates the pixels that needs filling and zero the ones already existing. However, SSD might replace the target patch by an inappropriate patch. The mismatch error keep getting accumulated along the inpainting process. This problem will definitely include undesired and unwanted objects in the desired target.

\begin{equation} 
	ssd(\Psi _{\textrm {p}},\Psi _{\textrm {q}})=\sum {\left |{ {\bar {M}\Psi _{\textrm {p}} -\bar {M}\Psi _{\textrm {q}}} }\right |^{2}}
	\label{ssd}
\end{equation}

\begin{figure}[H]
	\centering
	\begin{subfigure}[b]{.15\textwidth}
		\includegraphics[width=\textwidth]{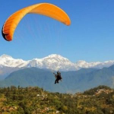}
		\caption{}
	\end{subfigure}
	\begin{subfigure}[b]{.15\textwidth}
		\includegraphics[width=\textwidth]{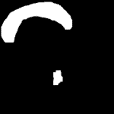}
		\caption{}
		\label{(b)}
	\end{subfigure}
	\begin{subfigure}[b]{.15\textwidth}
		\includegraphics[width=\textwidth]{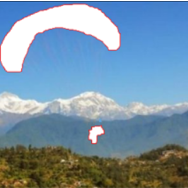}
		\caption{}
		\label{(c)}
	\end{subfigure}
	\begin{subfigure}[b]{.15\textwidth}
		\includegraphics[width=\textwidth]{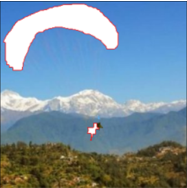}
		\caption{}
		\label{(d)}
	\end{subfigure}
	\begin{subfigure}[b]{.15\textwidth}
		\includegraphics[width=\textwidth]{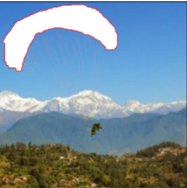}
		\caption{}
		\label{(e)}
	\end{subfigure}
	\begin{subfigure}[b]{.15\textwidth}
		\includegraphics[width=\textwidth]{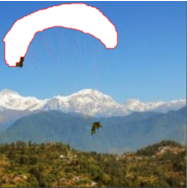}
		\caption{}
		\label{(f)}
	\end{subfigure}
	\begin{subfigure}[b]{.15\textwidth}
		\includegraphics[width=\textwidth]{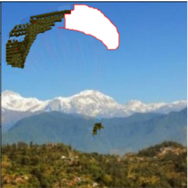}
		\caption{}
		\label{(g)}
	\end{subfigure}
	\begin{subfigure}[b]{.15\textwidth}
		\includegraphics[width=\textwidth]{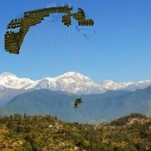}
		\caption{}
		\label{(h)}
	\end{subfigure}
	\caption{A visual proof of mismatch error. (a) the original image, (b) the target region, which is marked in white, (c) the first iteration of \cite{criminisi2004region}, (d)-(g) Different iteration of the inpainting process, (h) the last iteration.}
	\label{fig:lkl}
\end{figure}

Related experimental result as depicted in Fig.\ref{fig:lkl}, we show an instance of obtained results, since all of them have shown the same behavior. The main objective of this experiment was to remove the parachutist with his parachute, while maintaining a homogeneous background with a blue sky in the top, and a mountain structure in the middle of the image. As expected, after selecting the target region in image \ref{(b)} , the priority function prioritizes the parachutist section. Image \ref{(c)} showcases that the first iteration are going smooth, and the replacement selected looks reasonable. However, image \ref{(d)} showcases the introduction of a bad replacement patch, a small green patch taken from the forest was chosen as a replacement instead of a mountain structure. This error keeps getting propagated and accumulated in images \ref{(e)} - \ref{(g)}. The final result in image \ref{(h)} left us with a poor inpainted image full of unwanted objects. 

It is worth nothing that from this experimental result, the above mentionned mismatch issue does not come from the first step of the approach (i.e., Priority computation),  it is most likely related to the patch selection, due to the inadequate filling, as depicted in image \ref{(h)}. In fact, in this case study, the problem comes from applying SSD similarity measure, since it tends to choose the worst replacement out of all existing patches. 

This work builds upon understanding that selecting the most similar patch for filling becomes the critical factor. The pioneering work \cite{criminisi2004region} employed the SSD metric to assess patch similarity. However, Fig.\ref{fig:lkl} serves as visual evidence supporting our hypothesis that SSD can lead to error accumulation, and thus incoherent inpainted images, even in this simple case study. Recent work presented in \cite{zhang2020image} stated the same conclusion and proposed an enhanced SSD similarity measure to overcome this issue. However, the authors  did not modify the SSD core but, instead, they proposed a combination of SSD with DBP, and an additional conditional constraint.

In this research work, we have introduced another similarity measure (HySim) for efficiently selecting most similar patches with the main aim is to tackle the issue of error accumulations. The basics and principles of the proposed similarity approach is presented in the following section.

\section{The proposed approach}
In this section, we introduce the HySim approach, but it is essential, in the context of image processing, to first understand the relationship between patches' measure of similarities and distances. Despite the wide utilization of these concepts, to the best of our knowledge, a clear definition of similarity measure concept is highly required. In fact, this lack of clarity often leads sometimes to confusion between the concept of similarity and distance. So, before introducing our HySim approach, the next subsection is completely dedicated to some related definitions and notations, while highlighting the relationship between similarity and distance measure. 
\subsection{Definitions and notations}
In the framework of patch-based image inpainting, we have reformulated the definition of similarity measure, presented in \cite{huang2020novel}. There exist another definition of similarity presented in \cite{chen2009similarity}, with more constraint than the on defined here.

\begin{definition}
	A similarity measure is a function $s$ that maps pairs of patches,  included in $\Phi \times \Phi$ ($\Phi^2$), to a non-negative real number ($\mathbb{R}^+$).
	Given two patches,  denoted $\Psi_p$ and $\Psi_q$, of same and fixed size in the image domain $\Phi$. The function $s$ must verify the following straightforward criteria (SSM):
	\begin{enumerate}
		\item \textbf{Symmetry (S):} the similarity between $\Psi_p$ and $\Psi_q$ is the same as the similarity between $\Psi_q$ and $\Psi_p$:
		\begin{equation}
			s(\Psi_p, \Psi_q) = s(\Psi_q, \Psi_p), \forall \;(\Psi_q, \Psi_p)\; \in\; \Phi^2
		\end{equation}
		\item \textbf{Self-resemblance Maximization (SM):} the similarity between any two distinct patches, $\Psi_p$ and $\Psi_q$,  is always less than the maximum similarity $S \in \mathbb{R}^{\ast,+}$ of the function $s$, which is reached for $p=q$:
		\begin{equation}
			s(\Psi_p, \Psi_q) < S = s(\Psi_p, \Psi_p) ,\;\forall (\Psi_q, \Psi_p) \in \Phi^2
		\end{equation}
	\end{enumerate}
\end{definition}

\begin{prop}
	Any distance can be used to construct a similarity measure.
\end{prop}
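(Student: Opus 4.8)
The plan is to exhibit, for an arbitrary distance (metric) $d$ on the patch space $\Phi$, an explicit transform of $d$ that satisfies the two axioms (S) and (SM) of Definition~1. The idea is to compose $d$ with a strictly decreasing map from $[0,\infty)$ into $\mathbb{R}^{\ast,+}$ that sends $0$ to a fixed positive value; this converts the metric property ``$d=0$ iff the patches are equal'' into the similarity property ``$s$ is maximal iff the patches are equal''. Concretely I would set
\begin{equation}
	s(\Psi_p, \Psi_q) \;=\; \frac{1}{1 + d(\Psi_p, \Psi_q)}
\end{equation}
(the variant $s = e^{-d}$ works verbatim), and then verify the requirements one by one.

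First, well-definedness and codomain: because $d \ge 0$ on all of $\Phi^2$, the denominator never vanishes and $s$ maps $\Phi^2$ into $(0,1] \subset \mathbb{R}^{+}$. Second, symmetry (S) is inherited immediately from $d(\Psi_p,\Psi_q) = d(\Psi_q,\Psi_p)$. Third, for self-resemblance maximization (SM) I would invoke the identity of indiscernibles of $d$: since $d(\Psi_p,\Psi_p) = 0$, we get $s(\Psi_p,\Psi_p) = 1$, so the candidate maximum is $S = 1 \in \mathbb{R}^{\ast,+}$; and for any two distinct patches $d(\Psi_p,\Psi_q) > 0$, hence $1 + d(\Psi_p,\Psi_q) > 1$ and therefore $s(\Psi_p,\Psi_q) < 1 = s(\Psi_p,\Psi_p)$. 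This establishes all of (SSM). It is worth noting that only the symmetry and the identity of indiscernibles of $d$ are used; the triangle inequality plays no role, so the same argument covers semimetrics as well.

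I do not expect a genuine obstacle here; the only delicate point is the choice of the transform. A naive affine rescaling such as $s = c - d$ breaks down whenever the distance is unbounded — which is essentially the situation for $\ell^p$-type distances on patches once arbitrary pixel intensities are allowed — because it would leave $\mathbb{R}^{+}$ and admit negative values. This is precisely why the plan uses a bounded, strictly decreasing reparametrization of $[0,\infty)$ rather than a linear one. One may finally remark that any strictly decreasing $\varphi : [0,\infty) \to \mathbb{R}^{\ast,+}$ with finite $\varphi(0)$ yields a valid similarity $s = \varphi \circ d$, so the construction is far from unique; this is exactly the freedom that the following subsection exploits when designing the hybrid HySim measure.
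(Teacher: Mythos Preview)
Your argument is correct and complete: the transform $s=1/(1+d)$ (or $e^{-d}$) is well defined on all of $\Phi^2$, takes values in $(0,1]\subset\mathbb{R}^{+}$, inherits symmetry from $d$, and achieves its strict maximum $S=1$ precisely on the diagonal by the identity of indiscernibles. Your observation that the triangle inequality is never invoked is also accurate.

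The paper, however, takes exactly the affine route you warn against: it sets $S=\max_{(\Psi_p,\Psi_q)} d(\Psi_p,\Psi_q)$ and defines $s=S-d$, then checks (S) and (SM) directly. This is legitimate in the intended setting because image patches have pixel values in a bounded range (typically $[0,255]$), so any of the $\ell^p$-type distances on a fixed-size patch is bounded and the maximum exists; the subtraction then stays in $\mathbb{R}^{+}$. Your construction is more general --- it needs no boundedness hypothesis and so covers arbitrary metrics --- while the paper's is marginally more elementary once one grants that $\max d$ is finite. Your closing remark slightly overreaches, though: the HySim measure in the next subsection is built as a \emph{distance} (a weighted sum of Chebyshev and Minkowski) and is then minimized directly; the freedom in the choice of $\varphi$ is not actually exploited there.
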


\begin{proof}
	Let us consider a given distance $d$, which verifies the well known axioms (symmetry, identity of indiscernible, and triangle inequality). Let $S$ = $\max d(\Psi_p, \Psi_q)$ the maximum distance between two distinct patches $\Psi_p$ and $\Psi_q$. The function $s$ defined by $s(\Psi_p, \Psi_q) = S - d(\Psi_p, \Psi_q)$ is a similarity measure. According to the above-mentioned definition, a similarity measure must verify \textit{SSM} criteria. Let us verify that $s$ is a similarity measure as follows:
	\begin{enumerate}
		\item \textbf{(S):} $s(\Psi_p, \Psi_q) = S - d(\Psi_p, \Psi_q) = S - d(\Psi_q, \Psi_p) = s(\Psi_q, \Psi_p) \; \forall(\Psi_p, \Psi_q)\; \in\; \Phi^2$
		\item \textbf{(SM):} $s(\Psi_p, \Psi_q) = S - d(\Psi_p, \Psi_q) < S -\underbrace{d(\Psi_p, \Psi_p)}_{= 0}  =  s(\Psi_p, \Psi_p)\;\forall(\Psi_p, \Psi_q)\; \in\; \Phi^2$
	\end{enumerate}
\end{proof}
To summarize, there is a duality between similarity and distance, i.e., maximizing the similarity between two patches inherently translates to minimizing the distance between them.

\subsection{HySim}
Motivated by the above mentioned results, it is clear that the set of distances is included in the set of similarities. Therefore, we investigated the set of distances to find a distance that would solve the mismatch problem. After performing multiple tests on various types of images, we found two distances of interest. The first is the Minkowski distance, which focuses on a local feature scale. When $P=1$, it is called the Manhattan distance (Fig.\ref{manhat}). On the other hand, as the parameter $P$ in the Minkowski distance formula approaches infinity, it converges towards the Chebyshev distance (Fig.\ref{cheb}). The Chebyshev distance focuses on a global scale. Drawing inspiration from the strengths of both the Minkowski and Chebyshev distances, we propose the HySim approach - a hybrid similarity measure (Eq.\ref{Cheb-Mink}) for patch selection in exemplar-based inpainting. This similarity measure focuses both locally and globally.
\begin{equation}
	d_{\alpha, \beta}^{P, \infty} = \alpha\; \max \mid \Psi_p - \Psi_q \mid + \beta\; \sqrt[P]{\sum \mid \Psi_p - \Psi_q \mid ^P}  
	\label{Cheb-Mink}
\end{equation}

\begin{figure}[H]
	\centering
	\begin{subfigure}[b]{.15\textwidth}
		\includegraphics[width=\textwidth]{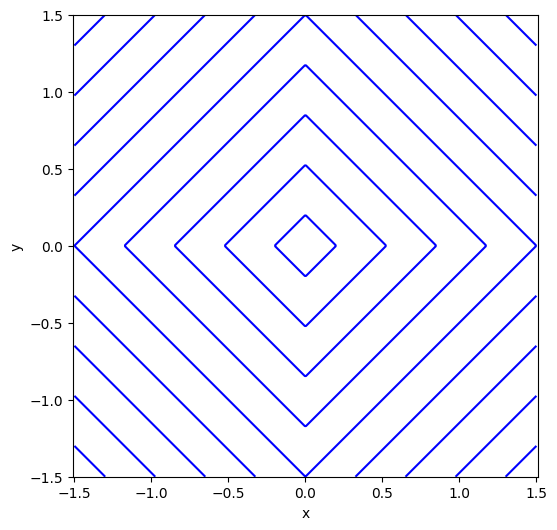}
		\caption{}
		\label{manhat}
	\end{subfigure}
	\begin{subfigure}[b]{.15\textwidth}
		\includegraphics[width=\textwidth]{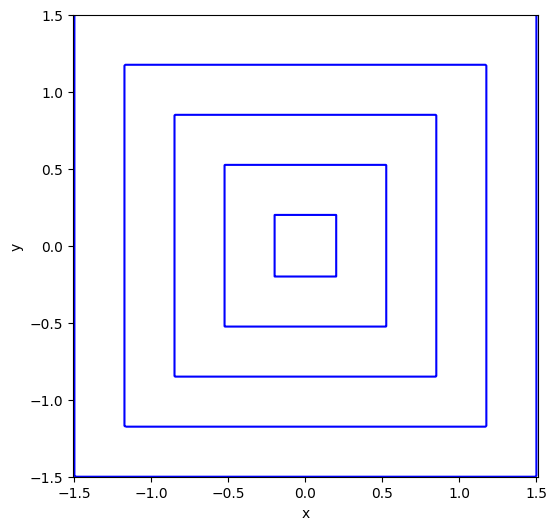}
		\caption{}
		\label{cheb}
	\end{subfigure}
	\begin{subfigure}[b]{.15\textwidth}
		\includegraphics[width=\textwidth]{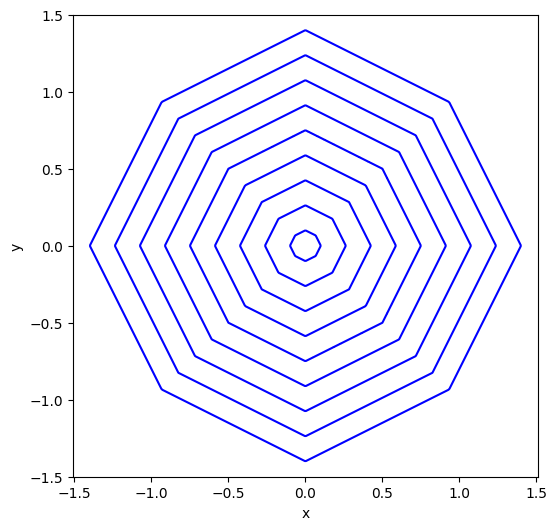}
		\caption{}
		\label{cheb_mink}
	\end{subfigure}
	\begin{subfigure}[b]{.15\textwidth}
		\includegraphics[width=\textwidth]{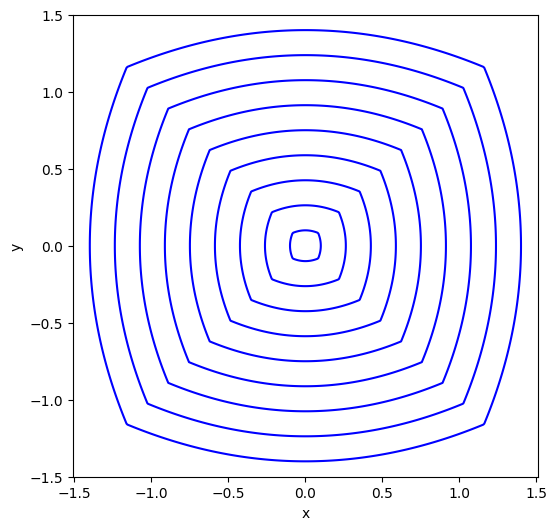}
		\caption{}
		\label{cheb_mink_2}
	\end{subfigure}
	\begin{subfigure}[b]{.15\textwidth}
		\includegraphics[width=\textwidth]{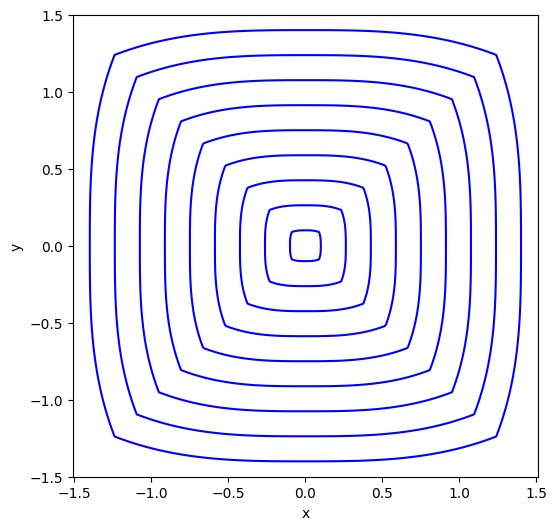}
		\caption{}
		\label{cheb_mink_3}
	\end{subfigure}
	\begin{subfigure}[b]{.15\textwidth}
		\includegraphics[width=\textwidth]{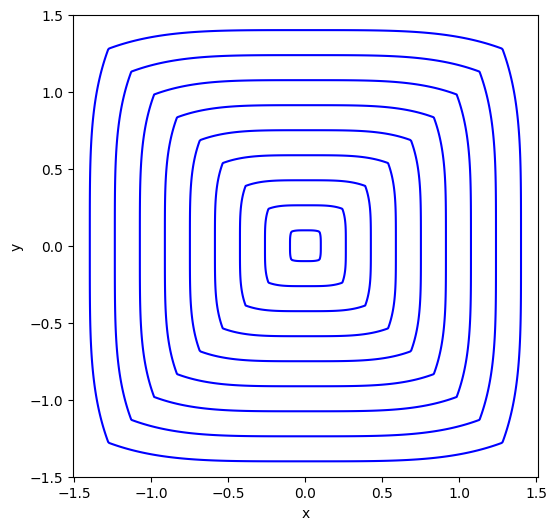}
		\caption{}
		\label{cheb_mink_4}
	\end{subfigure}
	\caption{Graphical representation in $\mathbb{R}^2$ of the Chebchev, Minkowski and their combination distance measure. (a) the Manhattan distance, (b) the Chebychev distance, (c) HySim approach for $P=1$, (d) HySim approach for $P=2$, (e) HySim approach for $P=3$, (f) HySim approach for $P=4$.}
	\label{fig:boule}
\end{figure}

The approach in question is fundamentally a measure of distance, with its validity as such established by \cite{rodrigues2018combining}, demonstrating that it meets the criteria for distance measures when $P>1$, they also highlighted the possibility of using it in image processing. As illustrated in Fig.\ref{fig:boule}, increasing the value of $P$ brings the measure closer to the Chebyshev distance. Therefore, it is advisable to maintain a lower value for the power to optimize its time complexity as also stated in \cite{rodrigues2018combining} and to minimize its mismatch error.
HySim is designed to achieve high accuracy in patch matching by leveraging both the strengths of Minkowski and Chebychev distance. And, by testing on the image in Fig.\ref{fig:lkl}, our proposed approach resulted in a smooth, well inpainted, image Fig.\ref{(i)}. The selected patches assured the completion of the sky and the mountain of trees.

\begin{figure}[h]
	\centering
	\begin{subfigure}[b]{.15\textwidth}
		\includegraphics[width=\textwidth]{./inc/_a_}
		\caption{}
		\label{(a)2}
	\end{subfigure}
	\begin{subfigure}[b]{.15\textwidth}
		\includegraphics[width=\textwidth]{./inc/_h_}
		\caption{}
		\label{(h)2}
	\end{subfigure}
	\begin{subfigure}[b]{.15\textwidth}
		\includegraphics[width=\textwidth]{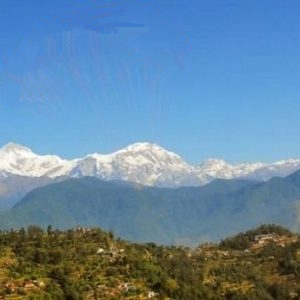}
		\caption{}
		\label{(i)}
	\end{subfigure}
	\caption{A visual comparison of our approach against \cite{criminisi2004region}. (a) the original image with the parachutist in the sky. (b) the output of \cite{criminisi2004region}. (c) our approach inpainted image.}
\end{figure}

\noindent To carry out our proposed approach, we formulated the following algorithm (\ref{alg:cap}). This latter,  systematically addresses the challenges of patch replacement within the exemplar-based inpainting framework. It incorporates the HySim similarity measure and leverages its advantages for better patch selection.

\begin{algorithm}[H]
	\caption{The proposed Algorithm}
	\label{alg:cap}
	\begin{algorithmic}
		\Require Original Image and a corresponding mask.
		\Ensure Restored image.
		\While{There existe a pixel in M equal to one}
		\State \textcolor{red}{/$\ast$ \textbf{Step 1:} Priority computation}
		\For{pixel p $\in$ $\partial \Omega$}
		\State $P(p)\;=\;C(p)\times D(p)$ 
		\EndFor
		\State \textcolor{red}{/$\ast$ \textbf{Step 2:} Selected Patch}
		\State $\Psi _{\textrm {p}} =\textrm{max}(P(\textrm {p}))$ 
		\State \textcolor{red}{/$\ast$ \textbf{Step 3:} Patch replacement search}
		\For{patch $\Psi_q$ in $\Phi$}
		\State $d_{\alpha, \beta}^{P, \infty} = \alpha\; \max \mid \Psi_p - \Psi_q \mid + \beta\; \sqrt[P]{\sum \mid \Psi_p - \Psi_q \mid ^P}$ 
		\EndFor
		\State  \textcolor{red}{/$\ast$ \textbf{Step 4:} Replacement selection}
		\State $\Psi _{\hat{q}} =\min (d_{\alpha, \beta}^{P, \infty})$ 
		\State  \textcolor{red}{/$\ast$ \textbf{Step 5:} Image replacing}
		\State $M \Psi_p = M \Psi _{\hat{q}}$
		\EndWhile
	\end{algorithmic}
\end{algorithm}

\section{Experimental Setup}
This section presents the experimental setup employed to evaluate the effectiveness of our proposed HySim approach. On one hand, we rigorously tested the performance of HySim against established techniques with public available code. In the other hand, we tested and compared our approach against images used in \cite{zhang2020image}. This controlled environment allows for a comprehensive comparison and objective assessment of HySim's capabilities. The evaluation process involved testing against available algorithms, applying them to the test images, and qualitatively analyzing the resulting inpainted images.
\subsection{Setup and Datasets}
To demonstrate the effectiveness of our proposed approach, we conducted multiple comprehensive evaluations on various image datasets (COCO \cite{lin2014microsoft} or the BSDS \cite{MartinFTM01}). We commenced by testing our HySim algorithm on fundamental geometric shapes to establish a baseline performance. Subsequently, we expanded the evaluation to encompass images with diverse textures, gradually increasing the difficulty level from easy to complex textures. We conducted the experiment with the $\alpha = 1$, $\beta = 1$, and $P \in \{1,2,3,4\}$ All experiments were meticulously executed on a computer equipped with an AMD Ryzen 7 processor (2.30 GHz) and 12 GB of RAM to ensure consistent and controlled testing conditions.
\subsection{Results and Discussion}
After testing HySim's capability in handling the mismatch error problem. We then showcased its effectiveness on various challenges, such as contour recognition. This section delves into the results and their implications, offering a deeper understanding HySim's capabilities.
\subsubsection{Basic Geometric Shapes}
To establish a baseline performance and assess the fundamental capabilities of HySim, we commenced our evaluation with images containing basic geometric shapes. This initial test will provide insights into the approach's ability to handle fundamental image features before exploring more complex scenarios.

We present three basic geometric shaped images in Fig.\ref{fig:green}-\ref{fig:curve}. Fig.\ref{fig:green} showcases HySim's ability to handle a common inpainting scenario - removing a medium-sized green dot from a background with two distinct colors, black and gray. The key challenge here is to seamlessly blend the restored area with the surrounding texture while maintaining the clear linear distinction between the black and gray regions. Ideally, HySim should reconstruct the missing area such that the black and gray areas remain intact, with no visible color bleeding, no mismatch patches, while keeping the separation between black and gray as linear as possible. This test provides valuable insights into the algorithm's effectiveness in preserving existing image details during the inpainting process. 

In the other hand, Fig.\ref{fig:triangle} challenges HySim's ability to reconstruct missing areas while strictly preserving the original geometric structure. Here, the top portion of a green triangle has been removed from a white background. The key challenge for HySim lies in accurately replicating the missing triangular section, ensuring the reconstructed edge seamlessly connects with the existing green edges while maintaining the triangle's overall shape. This test specifically assesses HySim's capability in handling sharp geometric features and in maintaining precise boundaries during the inpainting process. A successful restoration of this image demonstrated HySim's ability to accurately reconstruct missing regions while preserving the original image geometry. 

Fig.\ref{fig:curve} displays HySim's ability to handle the intricate task of restoring a missing element within a curved line, specifically focusing on maintaining both line smoothness and thickness. Here, a small red disk has been removed from the middle of a medium-thick curved line. The challenge lies in reconstructing the missing disk while ensuring the restored line segment seamlessly integrates with the existing sections. Unlike other approaches, HySim achieves a significant improvement by maintaining a slight curvature at the point of removal, even though it is not perfectly smooth. This demonstrates HySim's capability to not only reconstruct the missing element but also consider the original curvature of the line to a greater extent.

\begin{figure}[H]
	\centering
	\begin{subfigure}[b]{.15\textwidth}
		\includegraphics[width=\textwidth]{./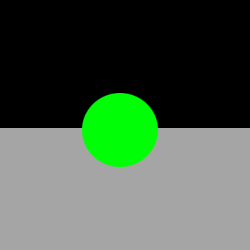}
		\caption{}
	\end{subfigure}
	\begin{subfigure}[b]{.15\textwidth}
		\includegraphics[width=\textwidth]{./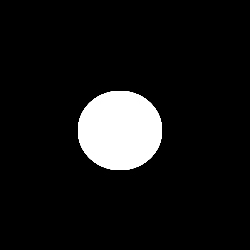}
		\caption{}
	\end{subfigure}
	\begin{subfigure}[b]{.15\textwidth}
		\includegraphics[width=\textwidth]{./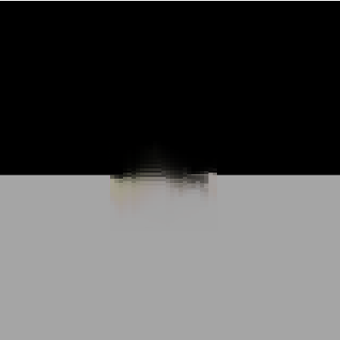}
		\caption{}
	\end{subfigure}
	
	\begin{subfigure}[b]{.15\textwidth}
		\includegraphics[width=\textwidth]{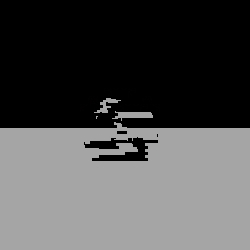}
		\caption{}
	\end{subfigure}
	\begin{subfigure}[b]{.15\textwidth}
		\includegraphics[width=\textwidth]{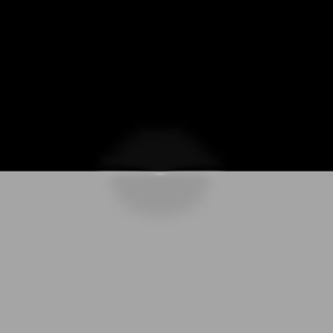}
		\caption{}
	\end{subfigure}
	\begin{subfigure}[b]{.15\textwidth}
		\includegraphics[width=\textwidth]{./inc/image7.jpg}
		\caption{}
	\end{subfigure}
	\begin{subfigure}[b]{.15\textwidth}
		\includegraphics[width=\textwidth]{./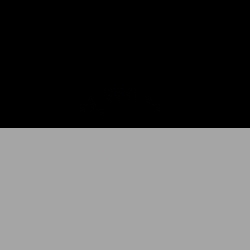}
		\caption{}
	\end{subfigure}
	\caption{Restoration results of first image. (a) the original image, (b) the target region, which is marked in white, (c) the result of approach in \cite{shen2009image}, (d) the result of approach in \cite{criminisi2004region}, (e) the result of approach in \cite{huang2014image}, (f) the result of the Perona-Malik approach, (g) the result of proposed approach, HySim.}
	\label{fig:green}
\end{figure}

\begin{figure}[H]
	\centering
	\begin{subfigure}[b]{.15\textwidth}
		\includegraphics[width=\textwidth]{./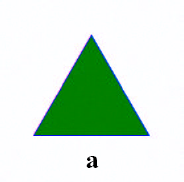}
		\caption{}
	\end{subfigure}
	\begin{subfigure}[b]{.15\textwidth}
		\includegraphics[width=\textwidth]{./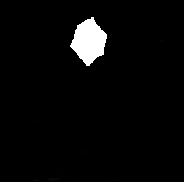}
		\caption{}
	\end{subfigure}
	\begin{subfigure}[b]{.15\textwidth}
		\includegraphics[width=\textwidth]{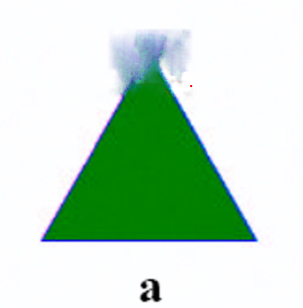}
		\caption{}
	\end{subfigure}
	\begin{subfigure}[b]{.15\textwidth}
		\includegraphics[width=\textwidth]{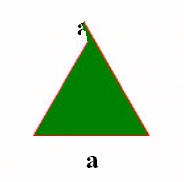}
		\caption{}
	\end{subfigure}
	\begin{subfigure}[b]{.15\textwidth}
		\includegraphics[width=\textwidth]{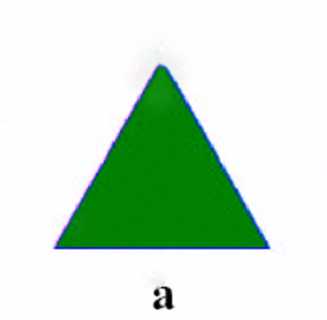}
		\caption{}
	\end{subfigure}
	\begin{subfigure}[b]{.15\textwidth}
		\includegraphics[width=\textwidth]{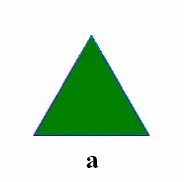}
		\caption{}
	\end{subfigure}
	\begin{subfigure}[b]{.15\textwidth}
		\includegraphics[width=\textwidth]{./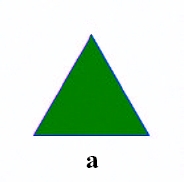}
		\caption{}
	\end{subfigure}
	\caption{Restoration results of second image. (a) the original image, (b) the target region, which is marked in white, (c) the result of approach in \cite{shen2009image}, (d) the result of approach in \cite{criminisi2004region}, (e) the result of approach in \cite{huang2014image}, (f) the result of the Perona-Malik approach, (g) the result of proposed approach, HySim.}
	\label{fig:triangle}
\end{figure}

\begin{figure}[H]
	\centering
	\begin{subfigure}[b]{.15\textwidth}
		\includegraphics[width=\textwidth]{./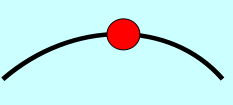}
		\caption{}
	\end{subfigure}
	\begin{subfigure}[b]{.15\textwidth}
		\includegraphics[width=\textwidth]{./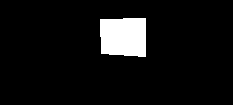}
		\caption{}
	\end{subfigure}
	\begin{subfigure}[b]{.15\textwidth}
		\includegraphics[width=\textwidth]{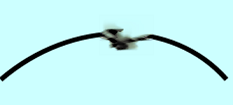}
		\caption{}
	\end{subfigure}
	\begin{subfigure}[b]{.15\textwidth}
		\includegraphics[width=\textwidth]{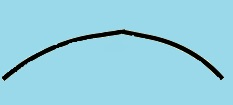}
		\caption{}
	\end{subfigure}
	\begin{subfigure}[b]{.15\textwidth}
		\includegraphics[width=\textwidth]{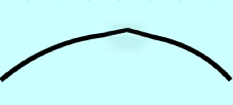}
		\caption{}
	\end{subfigure}
	\begin{subfigure}[b]{.15\textwidth}
		\includegraphics[width=\textwidth]{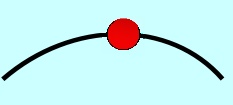}
		\caption{}
	\end{subfigure}
	\begin{subfigure}[b]{.15\textwidth}
		\includegraphics[width=\textwidth]{./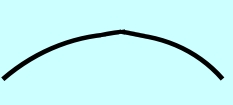}
		\caption{}
	\end{subfigure}
	\caption{Restoration results of third image. (a) the original image, (b) the target region, which is marked in white, (c) the result of approach in \cite{shen2009image}, (d) the result of approach in \cite{criminisi2004region}, (e) the result of approach in \cite{huang2014image}, (f) the result of the Perona-Malik approach, (g) the result of proposed approach, HySim.}
	\label{fig:curve}
\end{figure}
\subsubsection{Texture-rich Images}
Having established HySim's effectiveness on basic geometric shapes, we further evaluated our approach's restoration performance on texture-rich images, showcased in Fig.\ref{fig:a}-\ref{fig:c}. Each figure presents a progressively more complex scenario. For instance, Fig.\ref{fig:a} showcases the removal of a bear from a lake bathed in bright sunlight. The bear, partially submerged in the image's center, leaves a significant void requiring seamless restoration. Here, HySim faces the challenge of accurately replicating water ripples caused by the bear and the interplay of sunlight reflections. Additionally, since the bear occupies the center, HySim needs to seamlessly fill the gap while maintaining overall scene coherence, ensuring consistent lighting and smooth integration of the restored water with the surrounding undisturbed areas. This scenario showcases HySim's ability to handle complex natural environments and intricate details in real-world image restoration. 

Building on the complexities of the last figure, Fig.\ref{fig:b} displays HySim's ability to handle complex real-world scenarios by presenting a challenging image of a baseball game in progress. The image depicts a player swinging their bat in the middle ground, requiring HySim to seamlessly remove them while preserving the intricate background textures (grass, dirt around the mound). This demands cautious handling of diverse textures, object details, and dynamic poses, ensuring the reconstructed area blends flawlessly with the remaining scene. In the other hand, Fig. \ref{fig:c} showcases the removal of a wooden bench from a smooth concrete ledge. This scenario demands advanced capabilities from HySim as it needs to accurately choose replacement patches without interfering with the stone underneath the bench. HySim tackles this complexity by reconstructing the missing area while maintaining consistency in textures and seamlessly integrating the restored ledge surface around the existing stone, demonstrating its ability to handle intricate object-scene relationships and account for additional elements in real-world image restoration tasks.

\begin{figure}[H]
	\centering
	\begin{subfigure}[b]{.15\textwidth}
		\centering
		\includegraphics[width=\textwidth]{./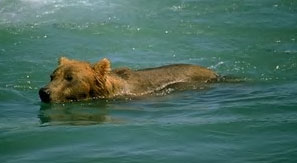}
		\caption{}
	\end{subfigure}
	\begin{subfigure}[b]{.15\textwidth}
		\centering
		\includegraphics[width=\textwidth]{./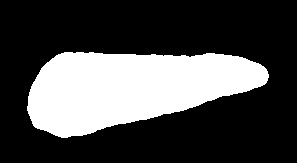}
		\caption{}
	\end{subfigure}
	\begin{subfigure}[b]{.15\textwidth}
		\centering
		\includegraphics[width=\textwidth]{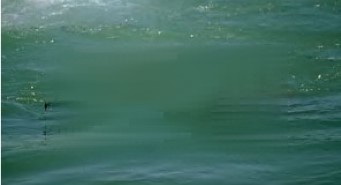}
		\caption{}
	\end{subfigure}
	\begin{subfigure}[b]{.15\textwidth}
		\centering
		\includegraphics[width=\textwidth]{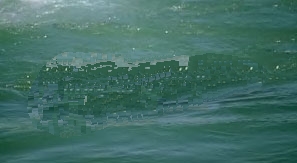}
		\caption{}
	\end{subfigure}
	\begin{subfigure}[b]{.15\textwidth}
		\centering
		\includegraphics[width=\textwidth]{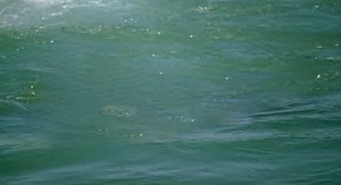}
		\caption{}
	\end{subfigure}
	\begin{subfigure}[b]{.15\textwidth}
		\centering
		\includegraphics[width=\textwidth]{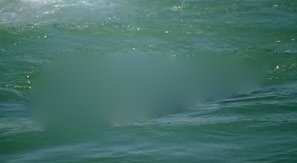}
		\caption{}
	\end{subfigure}
	\begin{subfigure}[b]{.15\textwidth}
		\centering
		\includegraphics[width=\textwidth]{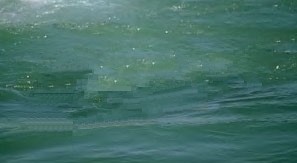}
		\caption{}
	\end{subfigure}
	\caption{Restoration results of "bear" image. (a) the original image, (b) the target region, which is marked in white, (c) the result of approach in \cite{shen2009image}, (d) the result of approach in \cite{criminisi2004region}, (e) the result of approach in \cite{huang2014image}, (f) the result of the Perona-Malik approach, (g) the result of proposed approach, HySim.}
	\label{fig:a}
\end{figure}

\begin{figure}[h]
	\centering
	\begin{subfigure}[b]{.15\textwidth}
		\centering
		\includegraphics[width=\textwidth]{./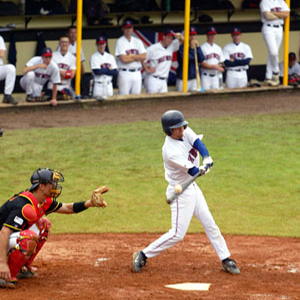}
		\caption{}
	\end{subfigure}
	\begin{subfigure}[b]{.15\textwidth}
		\centering
		\includegraphics[width=\textwidth]{./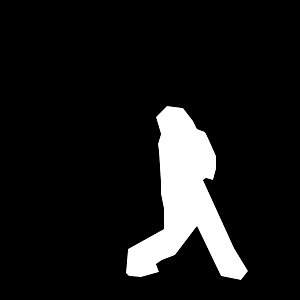}
		\caption{}
	\end{subfigure}
	\begin{subfigure}[b]{.15\textwidth}
		\centering
		\includegraphics[width=\textwidth]{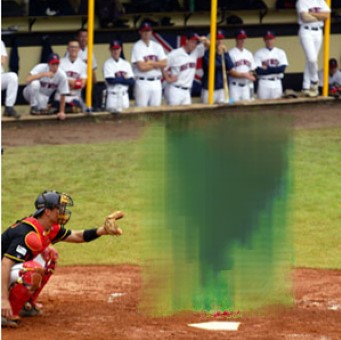}
		\caption{}
	\end{subfigure}	
	\begin{subfigure}[b]{.15\textwidth}
		\centering
		\includegraphics[width=\textwidth]{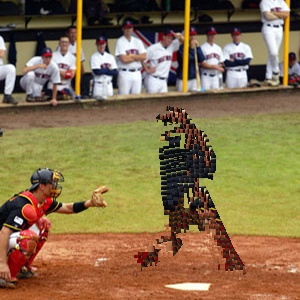}
		\caption{}
	\end{subfigure}
	\begin{subfigure}[b]{.15\textwidth}
		\centering
		\includegraphics[width=\textwidth]{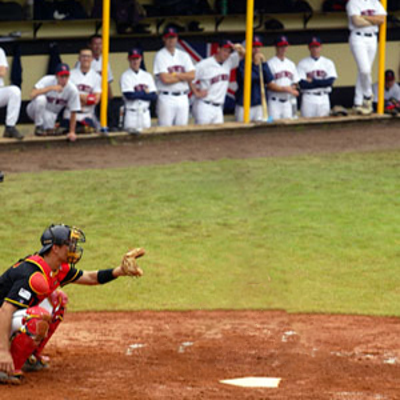}
		\caption{}
	\end{subfigure}
	\begin{subfigure}[b]{.15\textwidth}
		\centering
		\includegraphics[width=\textwidth]{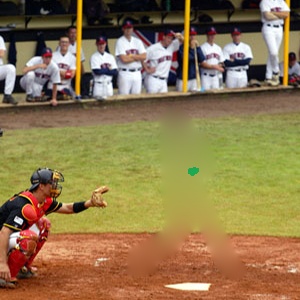}
		\caption{}
	\end{subfigure}
	\begin{subfigure}[b]{.15\textwidth}
		\centering
		\includegraphics[width=\textwidth]{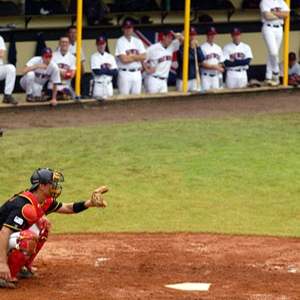}
		\caption{}
	\end{subfigure}
	\caption{Restoration results of "player" image. (a) the original image, (b) the target region, which is marked in white, (c) the result of approach in \cite{shen2009image}, (d) the result of approach in \cite{criminisi2004region}, (e) the result of approach in \cite{huang2014image}, (f) the result of the Perona-Malik approach, (g) the result of proposed approach, HySim.}
	\label{fig:b}
\end{figure}

\begin{figure}[H]
	\centering
	\begin{subfigure}[b]{.15\textwidth}
		\centering
		\includegraphics[width=\textwidth]{./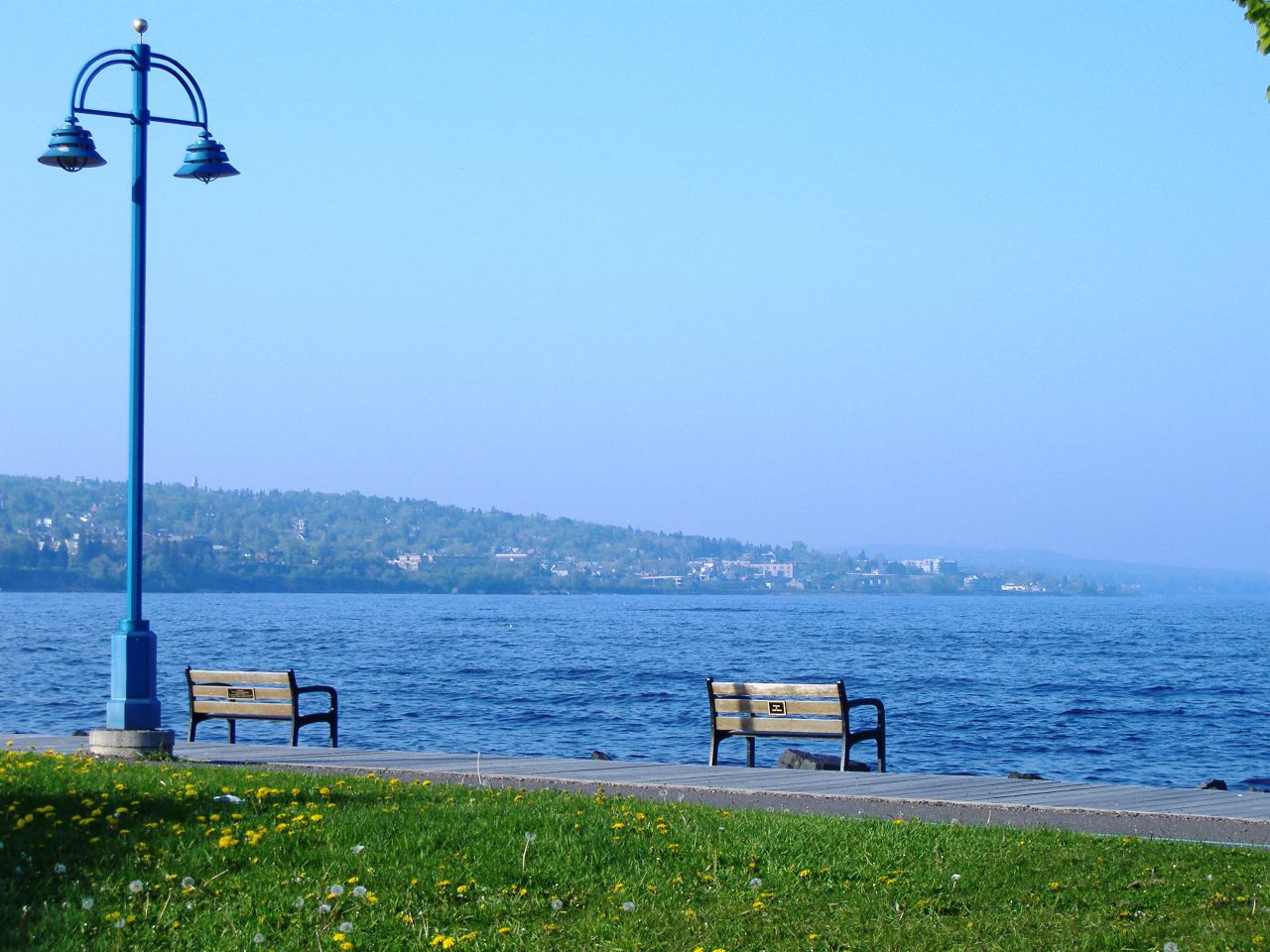}
		\caption{}
	\end{subfigure}
	\begin{subfigure}[b]{.15\textwidth}
		\centering
		\includegraphics[width=\textwidth]{./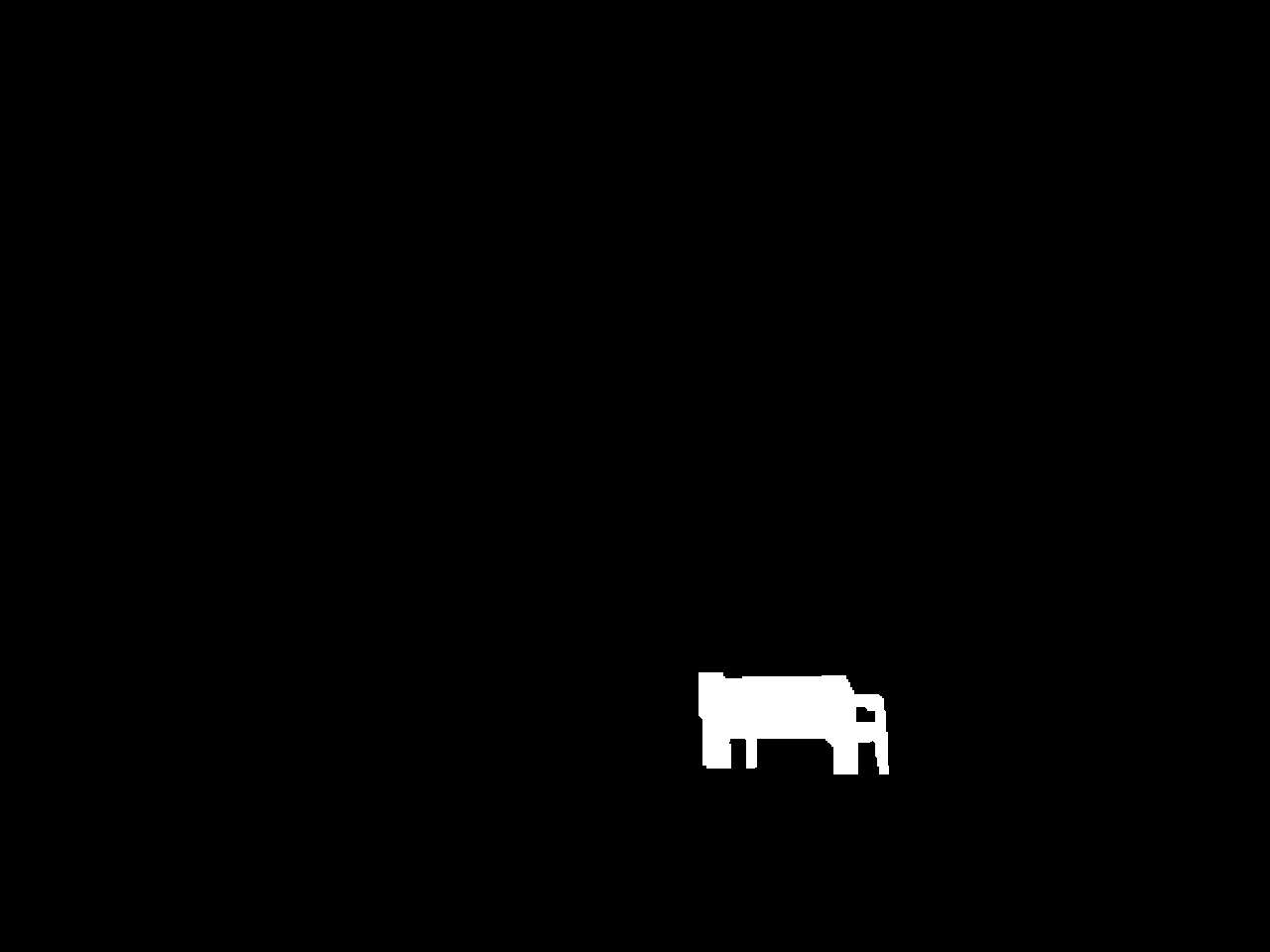}
		\caption{}
	\end{subfigure}
	\begin{subfigure}[b]{.15\textwidth}
		\centering
		\includegraphics[width=\textwidth]{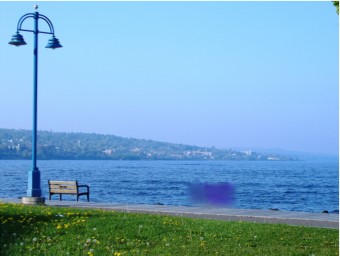}
		\caption{}
	\end{subfigure}
	\begin{subfigure}[b]{.15\textwidth}
		\centering
		\includegraphics[width=\textwidth]{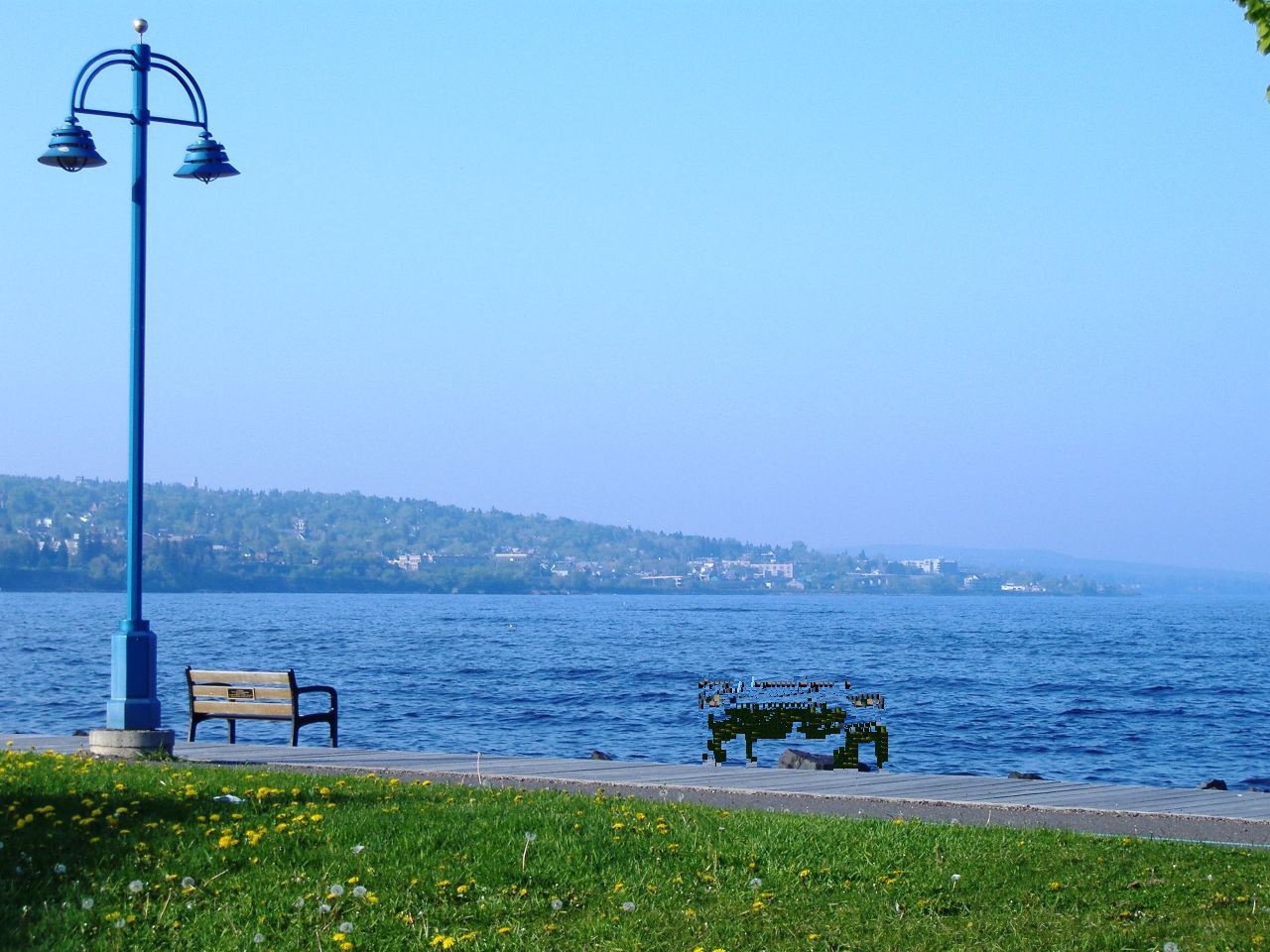}
		\caption{}
	\end{subfigure}
	\begin{subfigure}[b]{.15\textwidth}
		\centering
		\includegraphics[width=\textwidth]{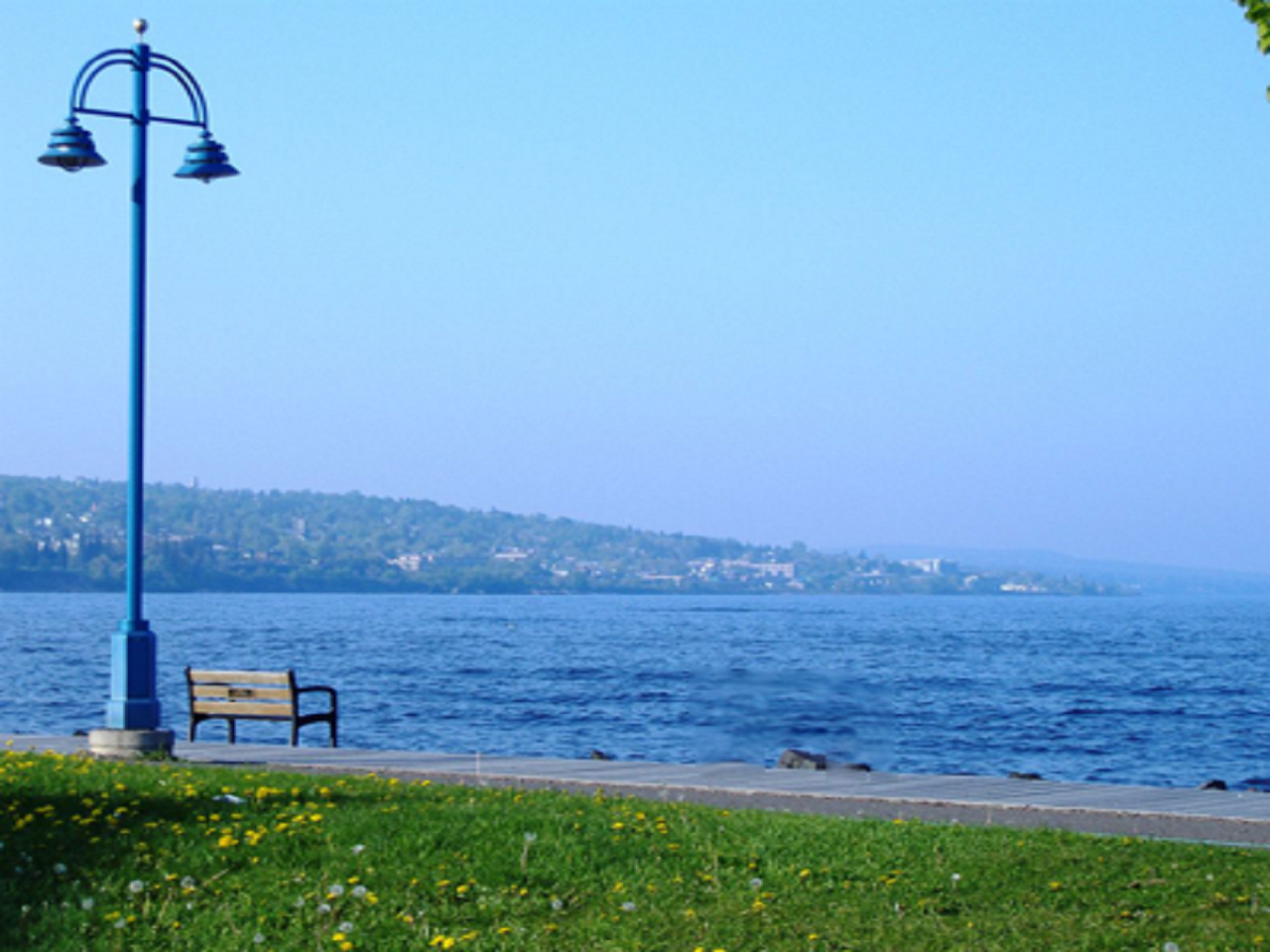}
		\caption{}
	\end{subfigure}
	\begin{subfigure}[b]{.15\textwidth}
		\centering
		\includegraphics[width=\textwidth]{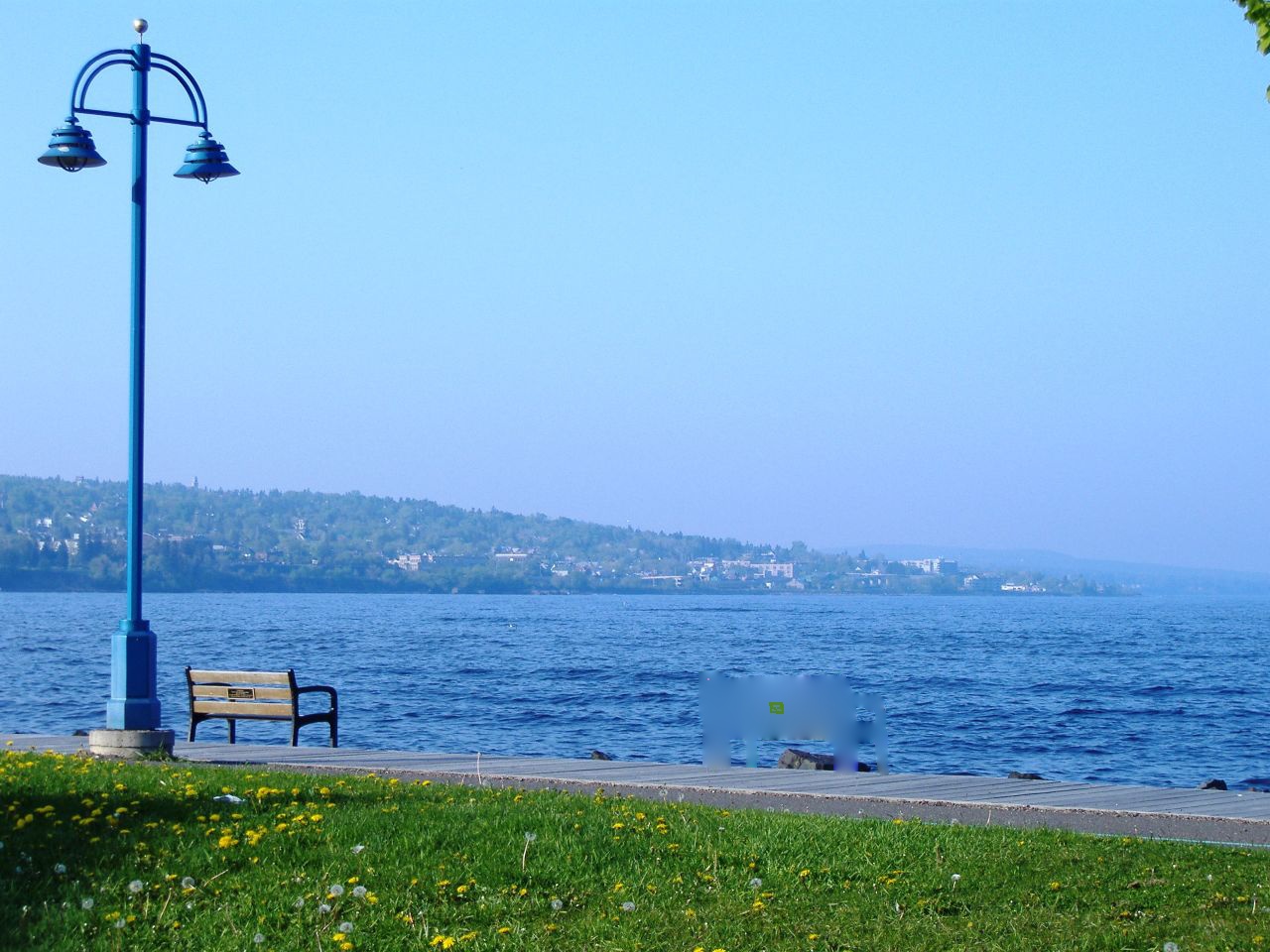}
		\caption{}
	\end{subfigure}
	\begin{subfigure}[b]{.15\textwidth}
		\centering
		\includegraphics[width=\textwidth]{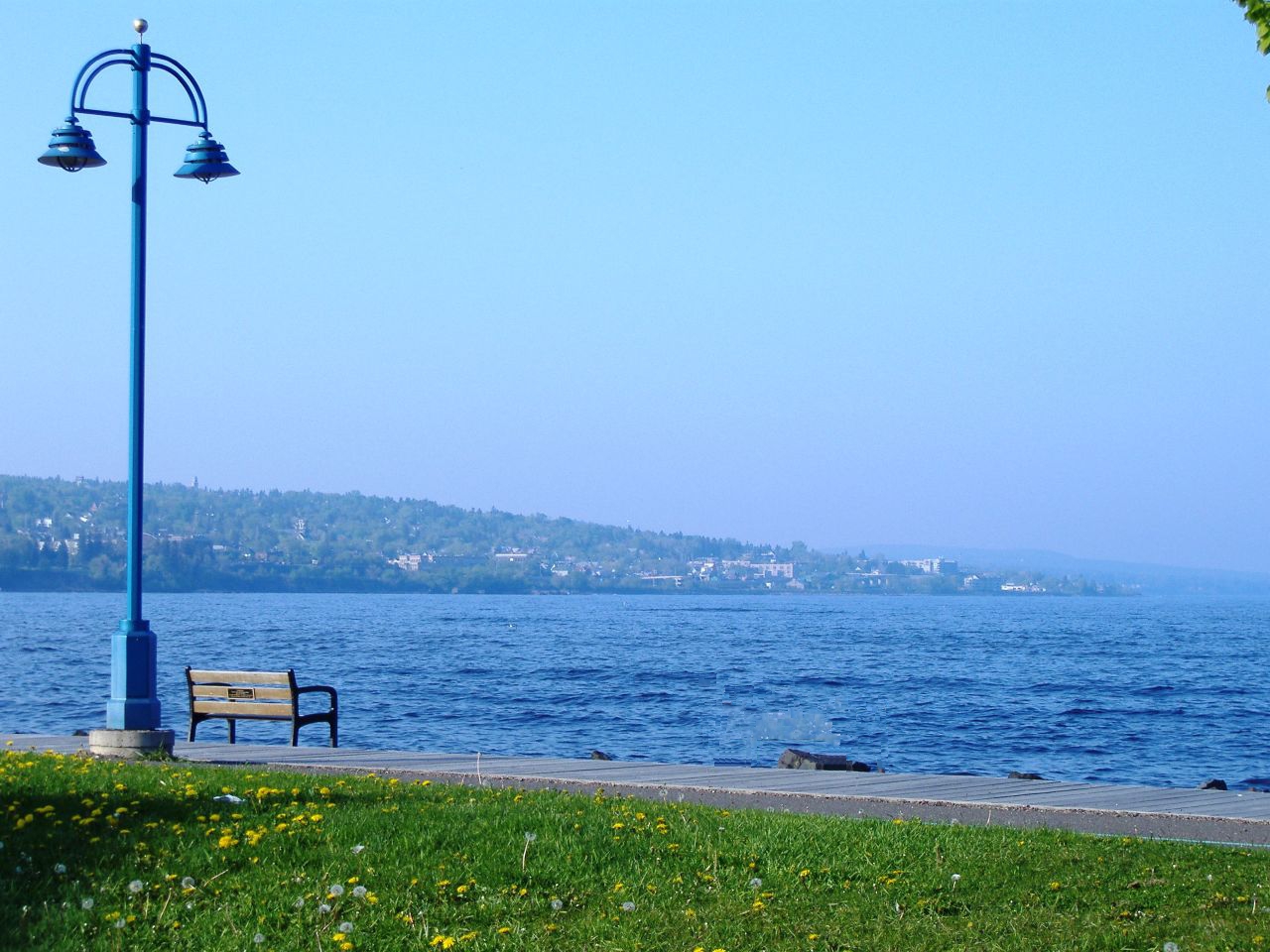}
		\caption{}
	\end{subfigure}
	\caption{Restoration results of "bench" image. (a) the original image, (b) the target region, which is marked in white, (c) the result of approach in \cite{shen2009image}, (d) the result of approach in \cite{criminisi2004region}, (e) the result of approach in \cite{huang2014image}, (f) the result of the Perona-Malik approach, (g) the result of proposed approach, HySim.}
	\label{fig:c}
\end{figure}

In order to further demonstrate the effectiveness of our approach, we compared it with several images from the previous work presented in \cite{zhang2020image}. Our testing showed that our approach produced visually superior inpainted results. In particular, HySim was more successful in avoiding line discontinuity and minimizing mismatch errors. 

Fig.\ref{fig:cyclist} showcases the capability of the HySim approach to handle objects with clear, straight lines, even in complex situations. Our approach successfully removed the cyclist entirely, including areas originally obscured by his shadow and his back. Remarkably, the filled-in region blends seamlessly with the existing straight red line (previously hidden by the shadow) and the blue straight line partially covered by the cyclist. This demonstrates the HySim approach's effectiveness in reconstructing these straight lines while maintaining a visually consistent image, resulting in a more natural and aesthetically pleasing restoration.

\noindent On the other hand, Fig.\ref{fig:hill} showcases another strength of our inpainting approach: removing large objects while avoiding mismatch error. In this scenario, the challenge involved removing a large, leafy tree including its shadow without disrupting the grassy hill behind it and the lake visible in the distance. Here, successful inpainting relies on accurately replicating the complex patterns of the grass on the hill. As evident in the figure, our approach seamlessly blended the inpainted area with the existing landscape using only green pixels, effectively replicating the grassy texture. Notably, the approach in \cite{zhang2020image} introduced a slight visual inconsistency by including blue pixels from the lake into the inpainted region, resulting in a mismatch error.
\begin{figure}[H]
	\centering
	\begin{subfigure}[b]{.15\textwidth}
		\centering
		\includegraphics[width=\textwidth]{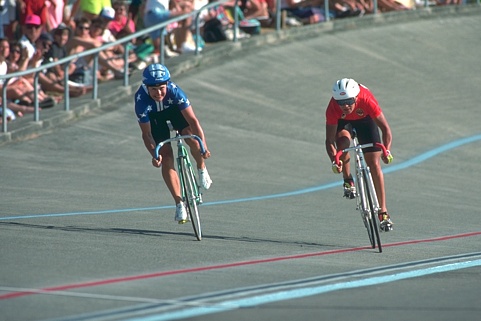}
		\caption{}
	\end{subfigure}
	\begin{subfigure}[b]{.15\textwidth}
		\centering
		\includegraphics[width=\textwidth]{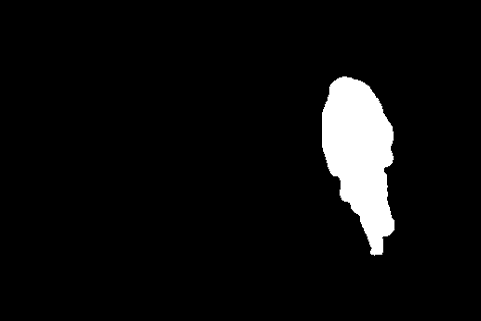}
		\caption{}
	\end{subfigure}
	\begin{subfigure}[b]{.15\textwidth}
		\centering
		\includegraphics[width=\textwidth]{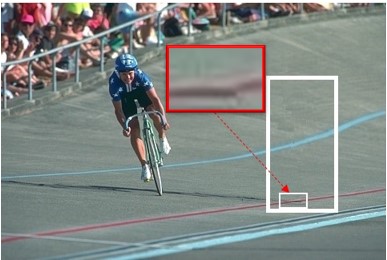}
		\caption{}
	\end{subfigure}
	\begin{subfigure}[b]{.15\textwidth}
		\centering
		\includegraphics[width=\textwidth]{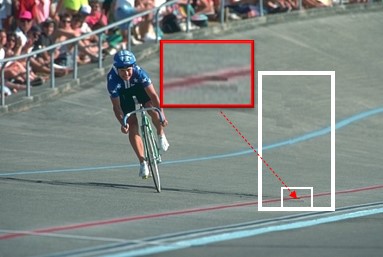}
		\caption{}
	\end{subfigure}
	\caption{Restoration results of the "cyclist" image. (a) the original image, (b) the target region, which is marked in white, (c) the result of approach in \cite{zhang2020image}, (d) the proposed approach.}
	\label{fig:cyclist}
\end{figure}

\begin{figure}[H]
	\centering
	\begin{subfigure}[b]{.15\textwidth}
		\centering
		\includegraphics[width=\textwidth]{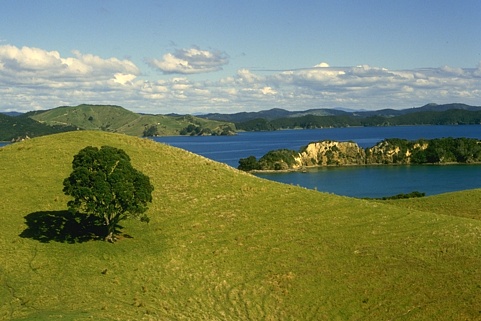}
		\caption{}
	\end{subfigure}
	\begin{subfigure}[b]{.15\textwidth}
		\centering
		\includegraphics[width=\textwidth]{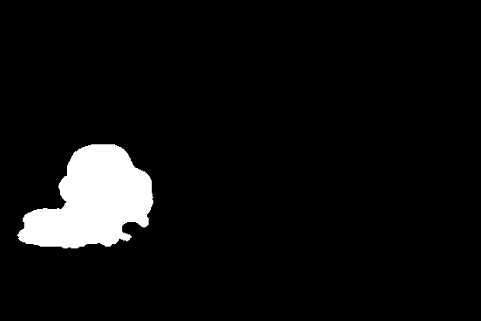}
		\caption{}
	\end{subfigure}
	\begin{subfigure}[b]{.15\textwidth}
		\centering
		\includegraphics[width=\textwidth]{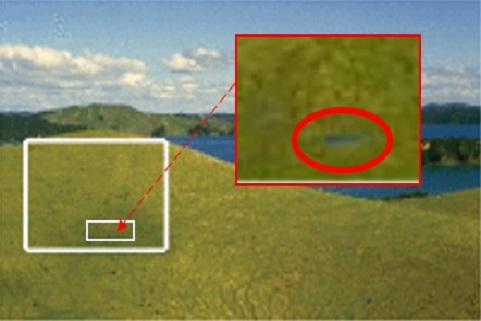}
		\caption{}
	\end{subfigure}
	\begin{subfigure}[b]{.15\textwidth}
		\centering
		\includegraphics[width=\textwidth]{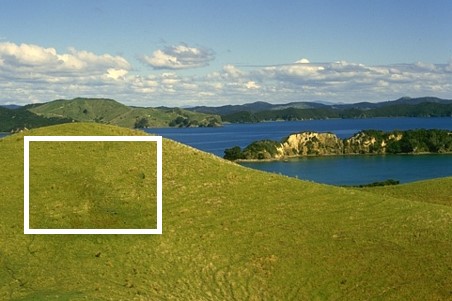}
		\caption{}
	\end{subfigure}
	\caption{Restoration results of the "hill" image. (a) the original image, (b) the target region, which is marked in white, (c) the result of approach in \cite{zhang2020image}, (d) the proposed approach.}
	\label{fig:hill}
\end{figure}

\section{Conclusions and Perspectives}
In this paper, we have first reformulated a definition of similarity measures in patch-based image inpainting, while highlighting that a similarity can be constructed from a distance. We then introduced HySim by combining Chebychev and Minkowski distances. We have conducted multiple experiments using different types of images, ranging from basic geometric shape to texture-rich images. Results have been presented and shown that our approach outperformed various model-driven approaches in avoiding mismatch error and respecting the shape contour. In fact, using HySim to enhance patch selection led to high-quality inpainting results (i.e., efficient restorations) with reduced mismatch errors. Ongoing work further investigates the inherent relationship between high-quality image inpainting and high-accuracy time series forecasts \cite{maaroufi2021predicting}. Preliminary results showcased the existing strong relationship between them, but still more experiments have to be conducted to show the effectiveness of high quality inpainting images on forecasting accuracy.

\end{document}